\pgfplotsset{width=7cm, compat=1.10}
\newtheorem{thm}{Theorem}[section]
\newtheorem{lem}[thm]{Lemma}
\theoremstyle{definition}
\theoremstyle{remark}
\def\RR{\mathbb{R}}
\def\QQ{\mathbb{Q}}
\def\AA{\mathcal{A}}
\begin{document}

\title{Identifying the Most Explainable Classifier}

\author{Brett Mullins\\
  \small brettcmullins@gmail.com
}

\maketitle

\begin{abstract}
  We introduce the notion of pointwise coverage to measure the explainability properties of machine learning classifiers. An explanation for a prediction is a definably simple region of the feature space sharing the same label as the prediction, and the coverage of an explanation measures its size or generalizability. With this notion of explanation, we investigate whether or not there is a natural characterization of the most explainable classifier. According with our intuitions, we prove that the binary linear classifier is uniquely the most explainable classifier up to negligible sets.
\end{abstract}

\section{Introduction}

The interpretability of machine learning models and explanations of model predictions have received much attention over the past decade \cite{Gu18}. These approaches attempt to explain what influences a model's behavior on a particular observation \cite{Ri16}. Though these concepts are often equivocated, interpretability usually focuses on what one can learn by inspecting a model's structure, e.g., observing the sign and magnitude of a weight in a linear regression \cite{Mo18}. In contrast, an explanation is a reason for a model's behavior at a specific point in the feature space and is local to that observation \cite{Gi18}. When we talk of a model being interpretable, we mean that its behavior is transparent with respect to inspecting a model's structure, hence the common white-box/black-box dichotomy. When we speak of a model being explainable, we mean that a reason can be given for the model's behavior at a given point in the feature space that is of a sufficient generality for the context of model usage. Explainability is a desirable property for a model, since it allows the user to build trust that the model's predictions accord with background knowledge, to better understand the model's behavior, and to ensure algorithmic fairness when used for potentially consequential decisions \cite{Yi19}.

In this paper, we explore the notion of the most explainable classifier through representing classifiers as partitions of euclidean space. In contrast to algorithmic approaches to measuring explainability, we introduce a theoretical framework where explainability is expressed as a geometric and topological property of a partition of euclidean space. In particular, we adopt the notion of pointwise coverage, first introduced with the probabilistic anchors approach in \cite{Ri18}, as an aggregate measure of explainability over all points in the feature space. Using the notion of pointwise coverage, we prove a characterization result uniquely identifying the most explainable classifier as a refinement of the binary linear classifier. Though this result is unsurprising, it provides a foundation for our intuitions about linear classifiers and corroborates the utility of this theoretical framework.

This paper proceeds as follows. In Section \ref{s:classifiers}, we develop a formal framework to represent classifiers, and, in Section \ref{s:coverage}, we introduce an approach to explanations of classifier predictions and a measure of classifier explainability called pointwise coverage. In Section \ref{s:refined_linear}, we introduce the refined linear classifier and prove that no classifier is more explainable than it with respect to pointwise coverage. In Section \ref{s:infinite_coverage}, we prove the converse result and establish that the refined linear classifier is uniquely the most explainable classifier. In Section \ref{s:refining}, we characterize the collection of classifiers that can be refined to the refined linear classifier. Finally, in Section \ref{s:conclusion}, we conclude.

\section{A Formal Approach to Classifiers} \label{s:classifiers}

In machine learning and related fields, the general task of classification is to accurately assign an observation to its corresponding label. In this section, we present a formal framework to express classifiers as partitions of euclidean space.

We define a \emph{classifier} $P$ as a partition of $\RR^n$ such that there exists $R \in P$, called the \emph{refinement set}, where $R$ is potentially empty, meagre, and Lebesgue null. Define the \emph{label set} of $P$ as $L_P = P \setminus \{ R \}$ and the \emph{feature space} of $P$ as $\bigcup L_P$.
For $x \in \RR^n$, let $P(x) \in P$ be the member of $P$ containing $x$. We call $P(x)$ the \emph{label} of $x$ with respect to model $P$. We call a classifier \emph{trivial} if the label set is a singleton set; otherwise, the classifier is \emph{non-trivial}.

We specify that the refinement set $R \subset \RR^n$ is both meagre and Lebesgue null to capture that $R$ is small or negligible both topologically and probabilistically. This specification follows from the intuition in the case where $R$ is the boundary between two labels of a classifier, e.g., if $R$ is the hyperplane separating the two labels of a binary linear classifier. Recall that a set is \emph{meagre} if it can be represented as the countable union of nowhere dense sets and a set $B$ is \emph{nowhere dense} if the closure of $B, \overline B,$ has no non-trivial open set. On the other hand, for probability measure $\mu$, we have that $\mu(R) = 0$, since $\mu(R) = \int_R f d\lambda$ and $\lambda(R) = 0$ where $\lambda$ is the Lebesgue measure and $f$ is a density function over $\RR^n$.

Observe that we make the simplifying assumption that the data generation process is continuous; while this is not strictly general, it is reasonable given that discrete features are embedded in $\RR^n$ and treated as numerically continuous in many popular machine learning models and algorithms.

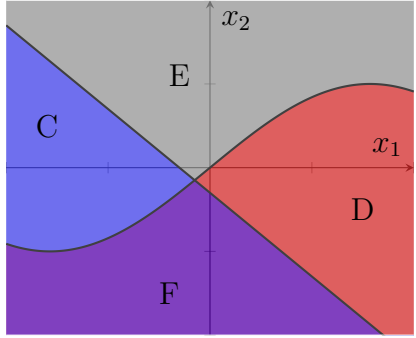
\begin{figure}[h]
  \centering
  \begin{tikzpicture}
    \begin{axis}[ xlabel={$x_1$}, ylabel={$x_2$}
      ,axis lines=middle
      ,samples=200, thick
      ,domain=-20:20
      ,xmin=-20
      ,xmax=20
      ,ymin=-20
      ,ymax=20
      ,xticklabels={,,}
      ,yticklabels={,,}
      ]
    \addplot+[no marks, name path=l1, color=darkgray, solid, thick] {10*sin(0.1*deg(x))};
    \addplot+[no marks, name path=l2, color=darkgray, solid, thick] {-1*x - 3};
    \addplot+[no marks, name path=above, color=white] {20};
    \addplot+[no marks, name path=below, color=white] {-20};
    \addplot[lightgray, opacity = 0.5] fill between[of=l1 and above];
    \addplot[gray, opacity = 0.5] fill between[of=l2 and above];
    \addplot[red, opacity = 0.5] fill between[of=l1 and below];
    \addplot[blue, opacity = 0.5] fill between[of=l2 and below];
    \node at (axis cs:-16,5) {C};
    \node at (axis cs:15,-5) {D};
    \node at (axis cs:-3,11) {E};
    \node at (axis cs:-4,-15) {F};
    \end{axis}
  \end{tikzpicture}
  \caption{Example Classifier as a Partition of $\mathbb{R}^2$}
  \label{g:curvy_example}
\end{figure}

As an example, let us consider the classifier $P$ in Figure \ref{g:curvy_example}. For this classifier, $L_P = \{ C, D, E, F \}$. From this figure, we can consider multiple classifiers. One such classifier is given by $P = L_P \cup \{ \emptyset \}$ where $E, F$ contain their boundaries but $C, D$ do not. In this case, the refinement set is empty. A classifier with an empty refinement set is called \emph{ordinary}; otherwise, a classifier is a \emph{refinement} of some ordinary classifier and is called \emph{refined}.
An example of a refined classifier from the figure above is given by $Q = \{ C, D, E, F, R \}$ where $L_Q = \{ C, D, E, F \}$, no member of $L_Q$ contains their boundary, and $R = \bigcup_{L \in L_Q} bd(L)$.\footnotemark While we only ever see ordinary classifiers ``in the wild'', the introduction of the notion of refinements removes the artificial complexity generated by edge cases in the feature space. By moving to a refinement of a classifier, we can better assess the aggregate explainability and topological properties of that classifier when edge cases are present.

\footnotetext{Note that $bd(X) = \overline{X} \cap \overline{\RR^n \setminus X}$ is the boundary of the set $X \subseteq \RR^n$. Whenever possible we follow the notation conventions in \cite{Mu00}.}

The framework developed in this section is sufficiently general to represent any classifier with continuous features. This ranges from a binary linear classifier where the labels are open and closed halfspaces in the feature space, respectively, to decision trees where the labels are disjoint unions of convex polytopes, i.e., intersections of open and closed halfspaces. These models are well-studied and have simple geometric characterizations. As we increase complexity with, for example, neural networks, we find that representations of these classifiers within this framework are possible but not intuitive or clear due to the compositions of non-linear activation functions found in many neural network architectures \cite{Sh14}. Nonetheless, in the next section, we introduce the pointwise coverage approach to measuring the explainability of a classifier expressed as a partition of euclidean space in the framework developed thus far.

\section{Explainability by Pointwise Coverage} \label{s:coverage}

An explanation of a classifier prediction is an elusive concept. Ideally, an explanation provides a reason for why a classifier assigns a particular label to a given observation. One way to achieve this and the perspective we adopt in this paper identifies an explanation for a classifier at a given observation as a definably simple region of the feature space containing the observation where all points in the region are assigned the same label. We refer to these definably simple regions of the feature space as anchors.

To what extent does defining such a region of the feature space provide an explanation for the model's classification? At first pass, we can think of an anchor as a sufficient condition for the classification of a point in the feature space; however, that alone is unhelpful, since an anchor could be an arbitrary subset of a label. By adding the requirement that anchors be definably simple regions of the feature space, we can ensure that the points in the anchor are meaningfully related or related by a simple condition. An anchor for a given point acts as an explanation by providing the definition of the relation grouping the points in the anchor as the reason for the classification.

Just as there are many approaches to interpretability and explainability, there are many ways to specify what is meant for an anchor to be definably simple. As an example, the probabilistic anchors approach uses rectangles in the feature space that minimize the number of conditions specified as anchors \cite{Ri18}. In contrast, we adopt open balls in euclidean space as anchors. Observe that both of these approaches use a distance-based relation to group points in the respective anchors, so that points in an anchor are in some sense spatially close to one another. While the probabilistic anchors approach is largely concerned with algorithmic and computational properties of identifying anchors in the feature space \cite{Si16,Gu18a}, we focus on geometric and topological properties of the label set. In particular, for a classifier $P$, we define an \emph{anchor} for a point $x \in \RR^n$ as an open ball $A = B(c,r)$ such that $x \in A \subset P(x)$. Notice that the anchor need not be centered at the observation of interest.

Observe that open balls are basic open sets in the standard topology on $\RR^n$. From a definability perspective, basic open sets are among the most simple sets of a topology, since all other open sets are countable unions of basic open sets. Moreover, the open sets occupy the space at the bottom of the Borel hierarchy, a stratification of the Borel sets, i.e., the sets constructed from open sets by iterative application of countable union, countable intersection, and complementation, and ordered by their definability in terms of open sets. To denote that the open sets are definably simple, we say that the open sets have a \emph{Borel rank} of 1. Sets of greater Borel rank are then more definably complex.\footnotemark

\footnotetext{Sets of greater Borel rank are outside of the scope of the present paper. For more on the Borel sets and the Borel Hierarchy, see \cite{Ke95, Sr98}.}

Explanations are usually local in the sense that they do not apply to all points in the feature space. For example, an explanation for the classification of a point $x \in \RR^n$ by classifier $P$ need not be an explanation for the classification of a distinct point $y \in \RR^n$. In particular, this will be the case when $y$ does not belong to an anchor for $x$ irrespective of whether $P(y) = P(x)$ or not. We may make the notion of local explanations precise by introducing the coverage of an anchor. For an anchor $A$ for point $x \in \RR^n$ with radius $r_A$ with respect to classifier $P$, the \emph{coverage of $A$} is given by $c_P(A) = r_A > 0$. If there exists an anchor $A = B(c,r)$ for a point $x \in \RR^n$ with radius $r > 0$ then that anchor acts as an explanation for all points in the feature space within an $r-$neighborhood of $c$.

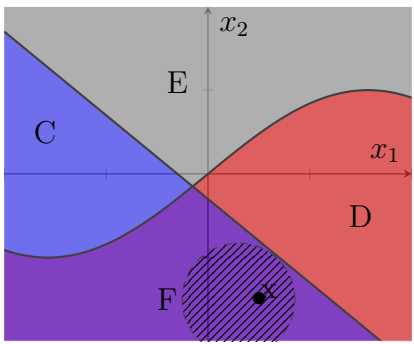
\begin{figure}[h]
  \centering
  \begin{tikzpicture}
    \begin{axis}[ xlabel={$x_1$}, ylabel={$x_2$}
      ,axis lines=middle
      ,samples=200, thick
      ,domain=-20:20
      ,xmin=-20
      ,xmax=20
      ,ymin=-20
      ,ymax=20
      ,xticklabels={,,}
      ,yticklabels={,,}
      ]
    \addplot+[no marks, name path=l1, color=darkgray, solid, thick] {10*sin(0.1*deg(x))};
    \addplot+[no marks, name path=l2, color=darkgray, solid, thick] {-1*x - 3};
    \addplot+[no marks, name path=above, color=white] {20};
    \addplot+[no marks, name path=below, color=white] {-20};
    \addplot[lightgray, opacity = 0.5] fill between[of=l1 and above];
    \addplot[gray, opacity = 0.5] fill between[of=l2 and above];
    \addplot[red, opacity = 0.5] fill between[of=l1 and below];
    \addplot[blue, opacity = 0.5] fill between[of=l2 and below];
    \node at (axis cs:-16,5) {C};
    \node at (axis cs:15,-5) {D};
    \node at (axis cs:-3,11) {E};
    \node at (axis cs:-4,-15) {F};
    \node at (axis cs:5,-15) {\textbullet};
    \node at (axis cs:6,-14) {x};
    \draw[darkgray,dashed,pattern=north east lines] (axis cs:3,-15) circle (0.75cm);
    \end{axis}
  \end{tikzpicture}
  \caption{Example Classifier with Anchors}
  \label{g:anchors_example}
\end{figure}

If a point in the feature space has an anchor, then it has many such anchors. To see this, observe that if $A$ is an anchor for $x \in \RR^n$ then there exists an $r > 0$ such that $B(x, r) \subset A$. Clearly, $B(x, r)$ is an anchor for $x$; however, $c_P(B(x, r)) = r \leq c_P(A)$, since $B(x, r) \subset A$. Given that a point can have many anchors and each are equally definably simple from a topological perspective, we choose an anchor with the greatest coverage as the best explanation for the classification.
Let $\AA_x$ denote the set of anchors for $x$ with respect to classifier $P$. We say that the \emph{coverage of $P$ at $x$} is given by $C_P(x) = \sup_{A \in \AA_x}c_P(A)$. If no anchors exist for a point $x$ with respect to classifier $P$, we say that the coverage of $P$ at $x$ is zero. If there exists a sequence of anchors for $x$ with increasing unbounded coverage, then we say that the coverage of $P$ at $x$ is infinite. Otherwise, we say that the coverage of $P$ at $x$ is finite.

Why do we prefer anchors with greater coverage to those with less as explanations? Just as coverage is a measure of the size of an anchor as a ball in euclidean space, it is also a measure of the generality of an explanation in feature space. In turn, we may say that coverage is a measure of the strength of a reason for a classifier's behavior at a point in the feature space. To illustrate this point, let us consider a concern raised about the veracity of explanations of classifier predictions when a classifier learns spuriously or erroneously from its training data \cite{La19}.

\begin{figure}[h]
  \centering
  \begin{tikzpicture}
    \begin{axis}[ xlabel={$x_1$}, ylabel={$x_2$}
      ,axis lines=middle
      ,samples=200, thick
      ,domain=-20:20
      ,xmin=-20
      ,xmax=20
      ,ymin=-20
      ,ymax=20
      ,xticklabels={,,}
      ,yticklabels={,,}
      ]

    \addplot[fill = blue!40!white, color = blue!40!white, opacity = 0.5] coordinates {
      (-7,1) (-7,20) (20, 20) (20,1)
    };
    \addplot[fill = blue!40!white, color = blue!40!white, opacity = 0.5] coordinates {
      (-20,-1) (-20,-10) (18, -10) (18,-1)
    };
    \addplot[fill = red!40!white, color = red!40!white, opacity = 0.5] coordinates {
      (-20,1) (-20,-1) (20, -1) (20,1)
    };
    \addplot[fill = red!40!white, color = red!40!white, opacity = 0.5] coordinates {
      (18,-1) (18,-10) (20, -10) (20,-1)
    };
    \addplot[fill = red!40!white, color = red!40!white, opacity = 0.5] coordinates {
      (-20,-10) (-20,-20) (20, -20) (20,-10)
    };
    \addplot[fill = red!40!white, color = red!40!white, opacity = 0.5] coordinates {
      (-20,1) (-20,20) (-7, 20) (-7,1)
    };
    \node at (axis cs:10,10) {M};
    \node at (axis cs:-10,-15) {N};
    \node at (axis cs:5,0) {\textbullet};
    \node at (axis cs:6,1) {x};
    \node at (axis cs:-15,10) {\textbullet};
    \node at (axis cs:-14,11) {y};

    \end{axis}
  \end{tikzpicture}
  \caption{Example of an Overfit Decision Tree}
  \label{g:overfit_classifier}
\end{figure}
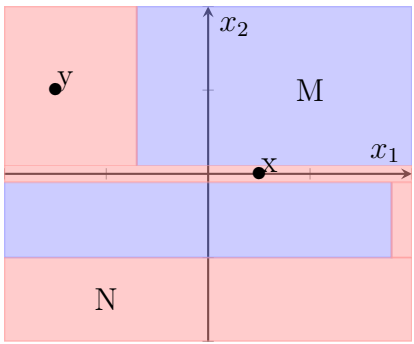

For the former case, consider a decision tree classifier that is potentially overfit during training. The example in Figure \ref{g:overfit_classifier} is a binary classifier $P = \{ M, N \}$, and let us assume that it is overfit with respect to the region containing the point $x$. Notice that since each label is the disjoint union of convex polytopes, this classifier represents a decision tree. Let us consider the coverage of $P$ at points $x, y$ in the feature space. While both points belong to the label $N$, it is apparent that $C_P(x) < C_P(y)$. The spurious learning of the classifier is, thus, reflected in this disparity in coverage between the two points. Relative to $y$, the explanation for the classification for $x$ is much weaker. To address the latter case, there need not be a correlation between a classifier's explainability and its veracity with respect to the training data. This is to say that the explainability properties of a classifier do not necessarily imply anything about it correctly learning from the training data.

To compare the explainability of a classifier at two points in the feature space, we can compare the classifier's coverage at those points. Note that this comparison is always relative to the scale of the feature dimensions; if the scale of the features are transformed, e.g., by an affine transformation, then the resulting coverage values may be different, since what were previously anchors may now be open ellipsoids rather than open balls. Fixing the scale of features, on the other hand, permits comparisons of coverage and, resultantly, explainability for particular points in the feature space across various classifiers.

Intuitively, when one estimates the explainability of a classifier or compares the explainability of multiple classifiers, it is not with reference to a specific point in the feature space. Comparing the coverages of various classifiers at every point in the feature space is not feasible, since that would entail uncountably many comparisons with no clear method of aggregating or summarizing the results of the comparison. A simple method of aggregating coverage up to the classifier-level is to consider the infimum and supremum of coverage across all points in the feature space. We refer to these aggregations of coverage as \emph{pointwise coverage}.

Let us consider two limiting cases of pointwise coverage: zero pointwise coverage and infinite pointwise coverage. We say that a classifier has \emph{zero pointwise coverage} if the supremum across all points in the feature space of a classifier's coverage is zero. With respect to coverage, classifiers with zero pointwise coverage are the least explainable; no point in the feature space has an anchor. Let us provide an example of such a classifier. Consider a classifier on $\RR$ given by $P = \{ \QQ, \RR \setminus \QQ, \emptyset \}$. Since both labels are dense in $\RR$, any potential anchor for a point in $\QQ$ must contain a point in $\RR \setminus \QQ$, and vice-versa. Luckily, one is almost surely not to encounter such an unexplainable classifier ``in the wild.''

Whereas zero pointwise coverage represents a lower-bound on explainability, infinite pointwise coverage is an upper-bound. We say that a classifier has \emph{infinite pointwise coverage} if the infimum across all points in the feature space of a classifier's coverage is infinite. In Section \ref{s:refined_linear}, we prove that there is a natural collection of classifiers that have this property: a refinement of binary linear classifiers which we call refined linear classifiers. Moreover, in Section \ref{s:infinite_coverage}, we prove that only the collection of refined linear classifiers have infinite pointwise coverage. The primary result of this paper is a full characterization of infinite pointwise coverage as a classifier of the form of a refined linear classifier:
\begin{thm} \label{thm:refined_linear_iff}
  A non-trivial classifier $P$ has infinite pointwise coverage just in case $P$ is a refined linear classifier.
\end{thm}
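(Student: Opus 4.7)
The plan is to split the biconditional into its two directions. For the forward direction (refined linear classifier implies infinite pointwise coverage), I would proceed by explicit construction: given a refined linear classifier with separating hyperplane $\{y : a \cdot y = b\}$ and a point $x$ in the positive halfspace at signed distance $\delta > 0$, I would center a ball at $c_s = x + s \, a/|a|$ with radius $r_s = s + \delta/2$. Then $x \in B(c_s, r_s)$, and for any $y \in B(c_s, r_s)$ one computes $a \cdot y > a \cdot c_s - |a| r_s = b + |a| \delta / 2 > b$, so the entire ball lies in the positive halfspace. Sending $s \to \infty$ yields anchors at $x$ of unbounded radius, so coverage is infinite there (and by symmetry everywhere in the feature space).

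For the converse, assume $P$ is non-trivial with infinite pointwise coverage. Each label is open because every one of its points sits in an open ball contained in it. The core tool I would establish is a limiting halfspace lemma: fix $x$ in a label $L$ and pick balls $B(c_n, r_n) \subset L$ with $x \in B(c_n, r_n)$ and $r_n \to \infty$. Any $y_0$ outside $L$ (existing by non-triviality) satisfies $|y_0 - c_n| \ge r_n \to \infty$, forcing $|c_n| \to \infty$. Passing to subsequences so that $u_n := c_n / |c_n| \to u$ and $d_n := |c_n| - r_n \to d$ (here $d \in \RR$: the option $d = -\infty$ would force $L = \RR^n$, and $d_n$ is bounded above by $x \cdot u_n + o(1)$), the ball condition $|y - c_n|^2 < r_n^2$ rewrites as
\[
y \cdot u_n \;>\; \frac{|y|^2}{2 |c_n|} + d_n \cdot \frac{|c_n| + r_n}{2 |c_n|},
\]
whose limit shows that the open halfspace $H_{L,x} := \{y : y \cdot u > d\}$ is contained in $L$ and that $x \in \overline{H_{L,x}}$.

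Applying this lemma to two distinct labels $L, M$ produces halfspaces $H_{L,x} \subset L$ and $H_{M,z} \subset M$; disjointness forces anti-parallel normals, since any two open halfspaces with non-anti-parallel normals overlap. This pins down a single axis $u$ along which every label's limiting halfspace is aligned, giving $L \supset \{y \cdot u > d_L\}$ and $M \supset \{y \cdot u < -d_M\}$ with $d_L + d_M \ge 0$, where $d_L = \inf_x d_{L,x}$ and $d_M = \inf_z d_{M,z}$. If $d_L > -d_M$ strictly, then the strip $\{-d_M < y \cdot u < d_L\}$ has positive Lebesgue measure, yet by the infimum definition any point of $L$ in this strip must satisfy $y \cdot u = d_L$ exactly (and similarly for $M$), so the strip would be covered by two hyperplanes plus the null refinement set $R$ --- a contradiction. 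Hence $d_L = -d_M$; and three or more labels are ruled out because three mutually anti-parallel unit normals cannot coexist. Therefore $|L_P| = 2$ with complementary open halfspace labels separated by the hyperplane $\{y \cdot u = d_L\}$, making $P$ a refined linear classifier. The main obstacle is the limiting halfspace lemma: carefully parametrizing the enlarging balls via $(u_n, d_n)$ so that the limit is a genuine full open halfspace contained in $L$, together with verifying boundedness of $d_n$; once this is in hand, the alignment of normals and the measure-theoretic gap argument are routine.
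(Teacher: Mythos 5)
Your proposal is correct and follows essentially the same route as the paper: an explicit growing-ball construction for the forward direction, and for the converse a limiting-halfspace lemma (the paper's Lemma \ref{p:open_half_space} / \ref{p:label_open_hspace}), anti-parallelism of normals to cap the number of labels at two, and negligibility of the refinement set to force the two open halfspaces to share a boundary. Your write-up is somewhat more explicit than the paper's in the subsequence and limit bookkeeping for the halfspace lemma, and it invokes Lebesgue-nullness of $R$ where the paper invokes meagreness, but these are cosmetic differences within the same argument.
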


\section{Refined Linear Classifier} \label{s:refined_linear}

Linear classifiers are ubiquitous throughout the history of machine learning and in data science today. ``The family of linear [classifiers] is one of the most useful families of hypothesis classes, and many learning algorithms that are being widely used in practice rely on linear [classifiers]'' \cite{Sh14}. This collection of classifiers includes not just linear regression but logistic regression, perceptrons \cite{Ro58}, linear support vector machines \cite{Va98}, etc.\footnotemark Within the scope of this paper, we are interested in binary linear classifiers, i.e. a linear classifier with only two labels.

\footnotetext{For more on linear classifiers, see \cite{Us14}, Chapter 9 of \cite{Sh14}, and Chapter 4 of \cite{Ha01}.}

Despite the algorithm or method used to train the classifier, a binary linear classifier can always be represented as the weighted sum of input features with real-valued weights and a real-valued threshold. With respect to pointwise coverage, we are interested in the geometric and topological characterization of a classifier rather than its explicit functional form. A \emph{binary linear classifier} is a classifier of the form $P = \{ M, N, R \}$ where $R$ is empty and $M, N$ are the open and closed halfspaces, respectively. Figure \ref{g:linear_classifier} below is an example of a binary linear classifier.

\begin{figure}[h]
  \centering
  \begin{tikzpicture}
    \begin{axis}[ xlabel={$x_1$}, ylabel={$x_2$}
      ,axis lines=middle
      ,style = {black}
      ,samples=41, thick
      ,domain=-20:20
      ,xmin=-20
      ,xmax=20
      ,ymin=-10
      ,ymax=10
      ,xticklabels={,,}
      ,yticklabels={,,}
      ]
    \addplot+[no marks, name path=decision, color=lightgray, solid, thick] {0.5*x - 1};
    \addplot+[no marks, name path=above, color=white] {10};
    \addplot+[no marks, name path=below, color=white] {-10};
    \addplot[white, opacity = 0.5] fill between[of=decision and above];
    \addplot[lightgray, opacity = 0.5] fill between[of=decision and below];
    \node at (axis cs:-10,5) {M};
    \node at (axis cs:10,-5) {N};
    \end{axis}
  \end{tikzpicture}
  \caption{Example Binary Linear Classifier}
  \label{g:linear_classifier}
\end{figure}
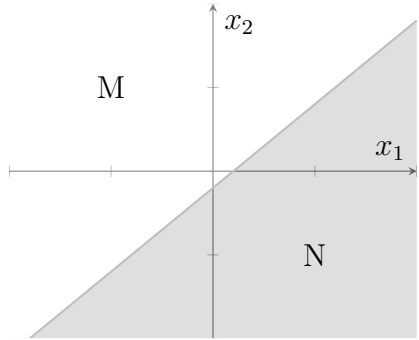

Let us consider the coverage properties of a binary linear classifier. By inspection, it is apparent that many points have non-zero coverage. Recall that the \emph{decision boundary} of a classifier is the set of points that separate labels. Put another way, the decision boundary for classifier $P$ with label set $L_P$ is given by $\bigcup_{L \in L_P} bd(L)$ as with the example accompanying Figure \ref{g:curvy_example} in Section \ref{s:classifiers}. For the binary linear classifier, its decision boundary is the hyperplane bordering the two labels. Note that the decision boundary actually belongs to one to the two labels, since one label is an open halfspace and the other is a closed halfspace. As a result, the classifier has zero coverage at each point on the decision boundary.

Observe that the set of points on which the binary linear classifier has zero coverage is Lebesgue null. Moreover, since the decision boundary is the boundary of an open set, it is nowhere dense, implying that the decision boundary is meagre. Let us introduce a refinement of the binary linear classifier by moving the decision boundary from the feature space to the refinement set. A \emph{refined linear classifier} is a classifier of the form $P = \{ M, N, R \}$ where $R$ is a hyperplane and $M, N$ are the open halfspaces above and below $R$. An example is illustrated with Figure \ref{g:refined_linear} below.

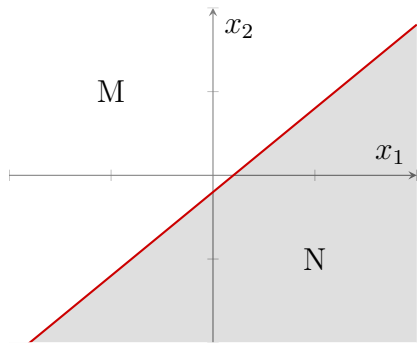
\begin{figure}[h]
  \centering
  \begin{tikzpicture}
    \begin{axis}[ xlabel={$x_1$}, ylabel={$x_2$}
      ,axis lines=middle
      ,samples=41, thick
      ,domain=-20:20
      ,xmin=-20
      ,xmax=20
      ,ymin=-10
      ,ymax=10
      ,xticklabels={,,}
      ,yticklabels={,,}
      ]
    \addplot+[no marks, name path=decision, color=red!80!black, solid, thick] {0.5*x - 1};
    \addplot+[no marks, name path=above, color=white] {10};
    \addplot+[no marks, name path=below, color=white] {-10};
    \addplot[white, opacity = 0.5] fill between[of=decision and above];
    \addplot[lightgray, opacity = 0.5] fill between[of=decision and below];
    \node at (axis cs:-10,5) {M};
    \node at (axis cs:10,-5) {N};
    \end{axis}
  \end{tikzpicture}
  \caption{Example Refined Linear Classifier}
  \label{g:refined_linear}
\end{figure}

By moving from the binary linear classifier to the refined linear classifier, the classifier no longer has zero coverage at any point in the feature space. Moreover, Theorem \ref{p:refined_linear} demonstrates that the refined linear classifier has infinite coverage at every point in the feature space.

\begin{thm} \label{p:refined_linear}
  If $P$ is a refined linear classifier, then $P$ has infinite pointwise coverage.
\end{thm}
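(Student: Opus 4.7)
The plan is to show directly that for every $x$ in the feature space $M \cup N$, the coverage $C_P(x)$ is infinite; since the infimum of a set of infinite values is infinite, $P$ will have infinite pointwise coverage by definition. To get infinite coverage at a given point, I need to exhibit a sequence of anchors (open balls sitting inside $P(x)$ and containing $x$) whose radii grow without bound. The geometric picture is simple: because $M$ and $N$ are open halfspaces, one can slide the center of a ball arbitrarily far into the halfspace along the normal direction, allowing the radius to grow while keeping $x$ in the ball and the ball off of $R$.

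By the symmetry between $M$ and $N$, I may assume $x \in M$, and I set $d = d(x, R)$, which is positive since $M$ is open and disjoint from $R$. Writing $v$ for the unit normal to the hyperplane $R$ pointing into $M$, my candidate anchor at stage $k$ would be $A_k = B(c_k,\, k + d)$ with $c_k = x + k v$. A direct calculation gives $|x - c_k| = k < k + d$, so $x \in A_k$. Because translating along $v$ by $k$ increases the signed distance to $R$ by exactly $k$, I would have $d(c_k, R) = k + d$, which means the open ball $A_k$ remains strictly on the $M$-side of $R$ and hence sits inside $M$. Thus each $A_k$ is an anchor at $x$ with $c_P(A_k) = k + d$, and taking $k \to \infty$ yields $C_P(x) = +\infty$.

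The only real point to handle carefully is the strict containment $A_k \subset M$: using the affine functional $y \mapsto v \cdot y - \alpha$ that defines $M = \{y : v \cdot y > \alpha\}$, one needs $v \cdot y > \alpha$ for every $y \in A_k$, and the choice of radius equal to $d(c_k, R) = k + d$ is precisely the borderline case where the \emph{closed} ball would meet $R$. Openness of both the halfspace and the ball is what saves the inclusion: the infimum of $v \cdot y$ over the open ball $A_k$ is $\alpha$ but is not attained, so $v \cdot y > \alpha$ holds pointwise on $A_k$. Once this verification is done for $x \in M$, the symmetric argument for $x \in N$ (replacing $v$ by $-v$) closes out the feature space, and we conclude $\inf_{x \in M \cup N} C_P(x) = +\infty$, which is the definition of infinite pointwise coverage.
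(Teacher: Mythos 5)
Your proposal is correct and follows essentially the same construction as the paper: sliding ball centers along the normal direction away from the separating hyperplane and letting the radii grow without bound. The only cosmetic difference is that you take the radius exactly equal to $d(c_k,R)$ and invoke openness of the ball and halfspace, whereas the paper pads with a margin $\alpha < d(x,R)$ so the inclusion is strict by a distance argument; both work.
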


\begin{proof}
  Suppose $P = \{ M, N, R \}$ is a refined linear classifier with $L_P = \{ M, N \}$ where $M, N$ are open halfspaces and $R$ is a hyperplane. Let $x \in \bigcup L_P$. Let $O$ be a line containing $x$ and orthogonal to $R$. Without loss of generality, we may assume that $O \ \cap \ R = \bf{0}$, i.e., the origin. Let $O^+ = \{ y \in O | \Vert y \Vert > \Vert x \Vert \} \cap P(x)$.
  Let $o_1, o_2, \ldots$ be a sequence of points on $O^+$ that are increasingly far from $x$, i.e., $\Vert o_i \Vert < \Vert o_{i+1} \Vert$. Let $\alpha < \Vert x \Vert$. Define $A_i = B(o_i, r_i)$, where $r_i = \Vert o_i - x \Vert + \alpha$.
  Observe that $A_i \subset P(x)$, since $r_i < \Vert o_i \Vert$. Since $x \in A_i, i \geq 1$, each $A_i$ is an anchor for $x$. Given that $(r_i)_{i \geq 1}$ is an increasing sequence, we have that $C_P(x) = \infty$.
\end{proof}

\section{Infinite Pointwise Coverage} \label{s:infinite_coverage}

In this section, we prove the inverse of Theorem \ref{p:refined_linear}: if a non-trivial classifier has infinite pointwise coverage, then it is a refined linear classifier. To attain this result, we first prove the following three lemmas. The first is a geometric property of an unbounded sequence of balls with at least a single point in common. The second applies the first lemma to the case of infinite coverage at a point in the feature space to imply properties about the shape and size of the point's label. Finally, the third proves that if a classifier has infinite pointwise coverage then it can have at most two labels in the label set.

\begin{lem} \label{p:open_half_space}
  If $x \in \RR^n$ and $(B_i)_{i \geq 1}$ is an unbounded sequence of balls with each containing $x$, $B = \bigcup_{i \geq 1} B_i$ contains an open halfspace $H$ such that $x \in bd(H)$.
\end{lem}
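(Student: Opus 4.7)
The plan is to extract a limiting direction from the centers of the balls via compactness of the unit sphere, and then to check that the open halfspace on the ``opposite side'' of $x$ from this direction lies inside $B$. After translating so that $x = 0$ and writing each $B_i = B(c_i, r_i)$ with $\|c_i\| < r_i$, the unboundedness hypothesis forces $\sup_i r_i = \infty$ (otherwise the union sits in a fixed bounded ball), so I may pass to a subsequence with $r_i \to \infty$.

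First I would dispose of the degenerate case. If $(c_i)$ is bounded, say $\|c_i\| \leq M$, then each $B_i$ contains $B(0, r_i - M)$ once $r_i > M$, so $B = \RR^n$ and any halfspace through $0$ serves as $H$. Otherwise, after a further subsequence I may assume $\|c_i\| \to \infty$ and, by compactness of $S^{n-1}$, that $u_i := c_i/\|c_i\|$ converges to some $u \in S^{n-1}$. I would then take
\[ H := \{ y \in \RR^n : \langle y, u \rangle > 0 \}, \]
so that $bd(H)$ is the hyperplane through $0$ orthogonal to $u$, and in particular $0 \in bd(H)$.

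The main step is to show $H \subseteq B$. Given $y \in H$ with $\langle y, u\rangle = \epsilon > 0$, the condition $y \in B_i$ unpacks (via $\|y - c_i\|^2 < r_i^2$) into
\[ \|y\|^2 - 2\|c_i\|\langle y, u_i\rangle \;<\; r_i^2 - \|c_i\|^2. \]
The right side is nonnegative since $\|c_i\| < r_i$, while the left side is driven to $-\infty$ by the term $-2\|c_i\|\langle y, u_i\rangle$, because $\|c_i\| \to \infty$ and $\langle y, u_i\rangle \to \epsilon > 0$. Hence for all sufficiently large $i$ the inequality holds, giving $y \in B_i \subseteq B$.

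The hard part will be identifying the correct candidate halfspace: one has to recognize that the limit direction of the centers points \emph{away} from the halfspace one hopes to capture, so that $bd(H)$ automatically passes through $x$, and to coordinate the two limits $\|c_i\|\to\infty$ and $u_i\to u$ in the key estimate. Once the geometric picture is fixed and the bounded-centers case is peeled off, the inclusion $H \subseteq B$ reduces to the elementary calculation above.
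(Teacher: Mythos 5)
Your proof is correct and follows essentially the same route as the paper's: extract a limiting direction $u$ of the centers via compactness of the sphere, peel off the degenerate case where $B = \RR^n$, and show that the halfspace $\{\,y : \langle y, u\rangle > 0\,\}$ is eventually absorbed by the balls (the paper phrases the key inequality as $\Vert p \Vert < 2\Vert q_n\Vert \cos\theta_{n,p}$, which is exactly your inner-product estimate in trigonometric form). One small slip in your closing remarks: the captured halfspace lies on the \emph{same} side as the limiting direction of the centers, not opposite to it --- your own definition of $H$ and the computation have it right.
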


\begin{proof} [Proof (with Paul Larson)]
  Let $x \in \RR^n$ and $B_n = B(q_n, r_n)$ be a ball centered at $q_n$ with radius $r_n > n$ such that $x \in B_n, n \geq 1$. Without loss of generality, let us assume that $x = \bf{0}$, the origin in $\RR^n$, and that $(r_n)_{n \geq 1}$ is strictly increasing. For each $n \geq 1$, let $s_n$ to be the unique point on the line between $\bf{0}$, $ q_n$ such that $\Vert s_n \Vert = 1$.
  Since the unit sphere $S^n \subset \RR^n$ is compact, let us fix $s^* \in S^n$ and suppose $s_n \rightarrow s^*$. By rotating space, we may assume that $s^* = (1, 0, \ldots, 0)$.

  Let $H = \{ (x_1, x_2, \ldots, x_n) | x_1 > 0 \}$ and $\theta_p$ is the angle $p \textbf{0} s^*$ for $p \in H$. Observe that $H$ is an open halfspace and $\cos \theta_p > 0$ for $p \in H$. While $\textbf{0} \notin H$, we have that $\textbf{0} \in bd(H)$.

  We want to show that $H \subset B = \bigcup_{i \geq 1} B_i$.
  Let $d_n = r_n - \Vert q_n \Vert$. If $d_n$ is unbounded, then $B = \RR^n$. Clearly, $H \subset B$. Otherwise, suppose $d_n$ is bounded. Since $r_n$ is increasing and unbounded, $\Vert q_n \Vert$ increases to infinity. For $p \in H$, it is sufficient to show that $\Vert q_n \Vert > \Vert q_n - p \Vert$, since $B_n$ contains $\textbf{0}$, i.e., $\Vert q_n \Vert < r_n$.
  This distance condition is equivalent to $\Vert p \Vert < 2 \Vert q_n \Vert \cos \theta_{n,p}$, where $\theta_{n,p}$ is the angle $p \textbf{0} s_n$. Since $\Vert q_n \Vert$ increases to infinity and $\theta_{n,p} \rightarrow \theta_p > 0$ , we can find a sufficiently large $n$ such that the distance condition holds. For such an $n$, $p \in B_n \subset B$.
\end{proof}

Lemma \ref{p:open_half_space} belongs to a family of results connecting the structure of spaces to the properties of unbounded sequences of balls \cite{Ba01}.

\begin{lem}  \label{p:label_open_hspace}
  If $P$ is a classifier such that the coverage of $P$ at $x$ is infinite, then $P(x)$ contains an open halfspace $H$ such that $x \in bd(H)$.
\end{lem}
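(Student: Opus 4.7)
The plan is to reduce this lemma to Lemma \ref{p:open_half_space} by producing the right unbounded sequence of balls from the hypothesis that $C_P(x) = \infty$. Because $C_P(x) = \sup_{A \in \AA_x} c_P(A) = \infty$, the definition given in Section \ref{s:coverage} guarantees a sequence of anchors $A_i \in \AA_x$ with radii $r_i \to \infty$. Write each $A_i = B(c_i, r_i)$. By the definition of anchor, $x \in A_i$ and $A_i \subset P(x)$ for every $i \geq 1$, so the union $\bigcup_{i \geq 1} A_i$ is a subset of $P(x)$.

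Next I would apply Lemma \ref{p:open_half_space} directly to the sequence $(A_i)_{i \geq 1}$: each ball contains $x$, and the sequence of radii is unbounded, which is exactly the hypothesis of that lemma. Lemma \ref{p:open_half_space} then produces an open halfspace $H$ with $x \in bd(H)$ and $H \subset \bigcup_{i \geq 1} A_i$. Combining this containment with the previous inclusion $\bigcup_{i \geq 1} A_i \subset P(x)$ yields $H \subset P(x)$ with $x \in bd(H)$, which is precisely the conclusion.

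There is essentially no obstacle here beyond bookkeeping; the real geometric content lives in Lemma \ref{p:open_half_space}. The only minor point worth verifying is that extracting a sequence of anchors whose radii tend to infinity is justified by the definition of infinite coverage given in Section \ref{s:coverage}, rather than merely by $\sup = \infty$ of an arbitrary indexed family. Since the paper explicitly defines $C_P(x) = \infty$ to mean that there exists a sequence of anchors for $x$ with increasing unbounded coverage, this step is immediate and no additional extraction argument is required.
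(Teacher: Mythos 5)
Your proof is correct and is essentially identical to the paper's: both extract a sequence of anchors for $x$ with unbounded radii, observe that their union lies in $P(x)$, and invoke Lemma \ref{p:open_half_space} to obtain the open halfspace $H$ with $x \in bd(H)$. Your added remark that the paper's definition of infinite coverage directly supplies such a sequence is a fair, if minor, point of care.
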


\begin{proof}
  Suppose $P$ is a classifier and the coverage of $P$ at $x$ is infinite. Then there exists a sequence of anchors for $x$ $(A_i)_{i \geq 1}$ with unbounded coverage. Let $A = \bigcup_{i \geq 1} A_i$. Observe that $A \subset P(x)$. By Lemma \ref{p:open_half_space}, $H \subset A$, where $H$ is an open halfspace. Hence, $H \subset P(x)$.
\end{proof}

\begin{lem} \label{p:inf_gc_PA}
  A classifier with infinite pointwise coverage can have at most two labels in the label set.
\end{lem}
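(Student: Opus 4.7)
The plan is to argue by contradiction. Suppose $P$ has infinite pointwise coverage yet its label set $L_P$ contains three distinct elements $L_1, L_2, L_3$. For each $i$ choose a point $x_i \in L_i$; since $x_i$ lies in the feature space $\bigcup L_P$, infinite pointwise coverage forces $C_P(x_i) = \infty$. Applying Lemma \ref{p:label_open_hspace} at each $x_i$ produces an open halfspace $H_i$ with $H_i \subset P(x_i) = L_i$ and $x_i \in bd(H_i)$. Because distinct blocks of a partition are disjoint, the halfspaces $H_1, H_2, H_3$ must be pairwise disjoint as well.

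The problem then reduces to the purely geometric claim that $\RR^n$ admits no three pairwise disjoint open halfspaces. Writing $H_i = \{ y \in \RR^n : a_i \cdot y > c_i \}$ with $a_i \neq 0$, I would first establish the intermediate fact that if two such halfspaces $H_i, H_j$ are disjoint then their normals $a_i, a_j$ must be antiparallel (negative scalar multiples of one another). The two cases to rule out are that $a_i, a_j$ are linearly independent (in which case the linear map $y \mapsto (a_i \cdot y, a_j \cdot y)$ is surjective onto $\RR^2$ and produces a point satisfying both strict inequalities simultaneously) and that $a_j$ is a positive scalar multiple of $a_i$ (in which case the two halfspaces become $\{ a_i \cdot y > c_i \}$ and $\{ a_i \cdot y > c_j' \}$ for some $c_j'$, and so one nests inside the other).

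Given this antiparallelism criterion, the three halfspaces $H_1, H_2, H_3$ would have pairwise antiparallel normals; but then from $a_2 = -\lambda_2 a_1$ and $a_3 = -\lambda_3 a_1$ with $\lambda_2, \lambda_3 > 0$ one concludes $a_3 = (\lambda_3/\lambda_2)\, a_2$, a \emph{positive} multiple of $a_2$, contradicting the antiparallelism required by disjointness of $H_2$ and $H_3$. The main (and really the only nontrivial) obstacle is this geometric claim about disjoint open halfspaces; once it is in hand, the reduction through Lemma \ref{p:label_open_hspace} and the disjointness of partition blocks is immediate.
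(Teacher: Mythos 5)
Your proposal is correct and follows essentially the same route as the paper: reduce via Lemma \ref{p:label_open_hspace} to the impossibility of three pairwise disjoint open halfspaces in $\RR^n$, and derive a contradiction from the fact that pairwise disjointness forces all the bounding hyperplanes to be parallel with oppositely oriented normals. Your algebraic treatment of the disjointness criterion (via surjectivity of $y \mapsto (a_i \cdot y, a_j \cdot y)$ and the nesting of halfspaces with positively proportional normals) is in fact more explicit than the paper's, which simply asserts the corresponding geometric facts.
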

\begin{proof}
  Let $P$ be a classifier with infinite global coverage. Let $x, y$ be such that $P(x), P(y) \in L_P$ and $P(x) \neq P(y)$. By Lemma \ref{p:label_open_hspace}, there exists open halfspaces $H_x \subset P(x)$ and $H_y \subset P(y)$. Observe that since $P(x), P(y)$ are disjoint, $bd(H_x), bd(H_y)$ must be parallel; otherwise, $H_x \cup H_y$ is non-empty.
  For contradiction, suppose there is a $z \notin P(x) \cup P(y)$ where $P(z) \in L_P$. Applying Lemma \ref{p:label_open_hspace} once more, we obtain that there is an open halfspace $H_z \subset P(z)$. Thus, we have $\RR^n$ contains three disjoint open halfspaces $H_x, H_y, H_z$.
  Then $bd(H_z)$ must be parallel to $bd(H_x), bd(H_y)$; otherwise, the intersection of $H_z$ with each of $H_x, H_y$ is non-empty. Observe that the halfspace to one side of $bd(H_z)$ has a non-empty intersection with $H_x$, while the other side has a non-empty intersection with $H_y$. $(\rightarrow \leftarrow)$.
\end{proof}

Observe that a classifier with a single label can have infinite pointwise coverage; however, a trivial classifier does not necessarily have infinite pointwise coverage. We provide an example of such a classifier in Section \ref{s:refining}. With that being said, we know that there is a unique ordinary trivial classifier, namely $\{ \RR^n, \emptyset \}$, and this classifier has infinite pointwise coverage.

With these lemmas in hand, we may now turn to the main proof in this section.

\begin{thm} \label{p:inf_cov->refined_linear}
  If $P$ is a non-trivial classifier with infinite pointwise coverage, then $P$ is a refined linear classifier.
\end{thm}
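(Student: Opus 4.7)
The plan is to leverage the three lemmas in sequence to identify $P$ as a refined linear classifier. First, non-triviality together with Lemma \ref{p:inf_gc_PA} forces $L_P = \{M, N\}$ with both labels nonempty. Then Lemma \ref{p:label_open_hspace}, applied at each point of $M \cup N$, supplies an open halfspace sitting inside its label and having that point on its boundary. I would observe that the parallelism argument already used inside Lemma \ref{p:inf_gc_PA}---that disjoint labels must produce halfspaces with parallel boundaries---actually forces a single common normal direction across every point of $M \cup N$, because each halfspace from $M$ must be parallel to each halfspace from $N$ (and both labels are nonempty). After rotating coordinates I would take this common direction to be $e_1$.

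With a common normal fixed, the next step is to determine orientations and convert the halfspaces into sidedness conditions on the first coordinate. Fixing some $x_0 \in M$ and, after possibly flipping the $e_1$ axis, assuming its halfspace is $\{p_1 > (x_0)_1\}$, a short disjointness argument shows that every $y \in N$ must have halfspace $\{p_1 < y_1\}$ and every $x \in M$ must have halfspace $\{p_1 > x_1\}$. Setting $\alpha = \inf\{x_1 : x \in M\}$ and $\beta = \sup\{y_1 : y \in N\}$, I would then read off that $M \supseteq \{p_1 > \alpha\}$, $N \supseteq \{p_1 < \beta\}$, and $\alpha \geq \beta$ by disjointness of $M$ and $N$.

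At this point the meagreness of $R$ enters. The slab $\{\beta < p_1 < \alpha\}$ is contained in $R$, so if $\alpha > \beta$ this slab is a nonempty open set, making $R$ non-meagre and contradicting the definition of a classifier. Hence $\alpha = \beta$, and I would call the common value $c$, so that $M \supseteq \{p_1 > c\}$, $N \supseteq \{p_1 < c\}$, and $R \subseteq \{p_1 = c\}$.

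The remaining---and in my view the most delicate---step is to show that no point of the hyperplane $\{p_1 = c\}$ actually lies in $M$ or $N$; Lemma \ref{p:label_open_hspace} alone does not rule this out, so here I would use the infinite coverage hypothesis directly. If $x \in M$ has $x_1 = c$, then even a single anchor $B(q, r) \subset M$ containing $x$ would have to fit inside $\{p_1 \geq c\}$, forcing $q_1 - r \geq c$, while $x \in B(q, r)$ with $x_1 = c$ forces $|q_1 - c| < r$; combining gives $q_1 \geq c + r > q_1$, a contradiction. The symmetric argument handles $N$. This pins down $R = \{p_1 = c\}$ and identifies $M, N$ as the two open halfspaces bounded by $R$, which is exactly the definition of a refined linear classifier.
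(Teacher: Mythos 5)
Your proposal is correct, and it follows the same skeleton as the paper's proof: Lemma \ref{p:inf_gc_PA} to get exactly two labels, Lemma \ref{p:label_open_hspace} at every point, the parallelism-by-disjointness observation, and the meagreness of $R$ to collapse the slab between the two labels. Where you genuinely diverge is in the endgame. The paper shows $P(x) = \bigcup_{z \in P(x)} H_z$ by picking, for each $w \in P(x)$, a point $\hat{w}$ of an anchor for $w$ lying strictly closer to $bd(H_y)$ along the orthogonal line, so that $w \in H_{\hat{w}}$; it then argues (somewhat informally, via pairwise unions ``and so forth'') that this union of nested parallel open halfspaces is itself an open halfspace, and finally uses the meagre-slab argument to force $bd(P(x)) = bd(P(y))$. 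You instead fix a common normal $e_1$, reduce everything to first coordinates, take $\alpha = \inf\{x_1 : x \in M\}$ and $\beta = \sup\{y_1 : y \in N\}$, and get $\alpha = \beta = c$ from the slab argument; the remaining worry --- that a label might contain points of the hyperplane $\{p_1 = c\}$ --- you dispatch with a direct contradiction showing such a point admits no anchor at all. This last step is the cleanest part of your write-up and is arguably more explicit than the paper's treatment, which only rules out boundary points implicitly by establishing that $P(x)$ is open as a union of open halfspaces. Both arguments ultimately invoke the infinite coverage hypothesis at the point in question (the paper to produce $\hat{w}$, you to produce the contradictory anchor), so the logical content is the same; your coordinate bookkeeping just makes the ``union of parallel halfspaces is a halfspace'' step rigorous for free via the infimum.
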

\begin{proof}
  Let $P$ be a non-trivial classifier with infinite pointwise coverage. Let $x \in P(x) \in L_P$. Since $P$ has infinite pointwise coverage, the coverage of $P$ at $x$ is infinite. By Lemma \ref{p:label_open_hspace}, $H_x$ is an open halfspace such that $H_x \subset P(x)$ and $x \in bd(H_x)$.
  Since $P$ is non-trivial, there is a $y \notin P(x)$ where $P(y) \in L_P$. By Lemma \ref{p:label_open_hspace}, $H_y \subset P(y)$ is an open halfspace with $y \in bd(H_y)$. Observe that since $P$ is a partition, $P(x) \cap P(y) = \emptyset$.
  By Lemma \ref{p:inf_gc_PA}, we have that $L = \{ P(x), P(y) \}$. Moreover, since $P(x), P(y)$ are disjoint, $bd(H_x), bd(H_y)$ must be parallel; otherwise, $H_x \cup H_y$ is non-empty.

  Let us define $H_{P(x)} = \bigcup_{z \in P(x)} H_z$, where $H_z$ refers to the open halfspace generated by applying Lemma \ref{p:label_open_hspace} to $z$. We claim that $H_{P(x)} = P(x)$. On the one hand, suppose $w \in H_{P(x)}$. Then, for some $z \in P(x)$, $w \in H_z$. Since $H_z \subset P(x)$, by construction, $w \in P(x)$.
  On the other hand, suppose instead that $w \in P(x)$. Since $P$ has infinite pointwise coverage, $P$ has infinite coverage at $w$. Let $A$ be an anchor for $w$. Let us define both $O_w$ as the line containing $w$ and orthogonal to $bd(H_y)$ and $\hat{y} = O_w \cap bd(H_y)$ as the single point common to both $O_w$ and the hyperplane $bd(H_y)$.
  Choose a point $\hat{w}$ from $A \cap O_w$ where $\Vert w - \hat{y} \Vert > \Vert \hat{w} - \hat{y} \Vert$.
  Applying Lemma \ref{p:label_open_hspace} to $\hat{w}$, we obtain an open halfspace $H_{\hat{w}} \subset P(x)$ where $bd(H_{\hat{w}})$ is parallel to $bd(H_y)$. By construction, $w \in H_{\hat{w}} \subset H_{P(x)}$.

  We have established that $P(x)$ is the countable union of open halfspaces. We further claim that $P(x)$ is an open halfspace. Let us first note that $P(x)$ is open since it is the union of open sets. For each $z \in P(x)$, $bd(H_z)$ must be parallel to $bd(H_y)$, implying $bd(H_z)$ is parallel to $bd(H_{z'})$ for $z, z' \in P(x)$. Then $H_z \cup H_{z'}$ is either $H_z$ or $H_{z'}$, i.e., a halfspace, and so forth.

  With $P(x)$ being an open halfspace, we also have that $P(y)$ is an open halfspace by symmetry. Recall that $R$ is meagre by assumption and that $bd(P(x)) \cup bd(P(y)) \subset \ R$. Then $R$ contains no non-trivial open set. For contradiction, let us suppose that $bd(P(x)) \neq bd(P(y))$. Let $O$ be a line orthogonal to both $bd(P(x)), bd(P(y))$; such a line exists, because $bd(P(x)), bd(P(y))$ are parallel.
  Let $\bar{x} = O \cap bd(P(x))$ and $\bar{y} = O \cap bd(P(y))$. Define $\alpha < \Vert \bar{x} - \bar{y} \Vert$ and $r = \frac{\bar{x} + \bar{y}}{2}$. Then $B(r, \alpha) \subset \ R$, but $R$ is contains no non-trivial open set. $(\rightarrow \leftarrow)$.

  We obtain $P = \{ P(x), P(y), R \}$ where $P(x), P(y)$ are open halfspaces and $R$ is the closed meagre hyperplane separating the halfspaces. Hence, $P$ is a refined linear classifier.
\end{proof}

Theorem \ref{p:inf_cov->refined_linear} provides a link between the pointwise coverage of a classifier and the geometry and topology of its labels. In particular, if a classifier has infinite pointwise coverage, then its labels consist of two open halfspaces. This result accords with our intuitions about infinite coverage and the curvature of the decision boundary. Namely, if the decision boundary is curved, then on one side of the boundary, for some point in that label, anchors will be bounded in size. Moreover, along with Theorem \ref{p:refined_linear}, Theorem \ref{p:inf_cov->refined_linear} implies the primary result of this paper: Theorem \ref{thm:refined_linear_iff}, a characterization of the most explainable collection of classifiers.

An obvious corollary of this result is that no ordinary classifier has infinite pointwise coverage. In Section \ref{s:refining}, we explore the collection of classifiers that can be refined to the refined linear classifier, i.e., to a classifier with infinite pointwise coverage.

\section{Refining Ordinary Classifiers} \label{s:refining}

In Sections \ref{s:refined_linear} and \ref{s:infinite_coverage}, we established that the refined linear classifier is uniquely the most explainable classifier. While this is an interesting property of the pointwise coverage framework, in isolation, interest in this result is limited to strictly theoretical concerns. Recall that only ordinary classifiers are found ``in the wild''. To this end, in Section \ref{s:refined_linear}, we illustrated that a refined linear classifier is a refinement of a binary linear classifier, a typical ordinary classifier. In this section, we identify the collection of ordinary classifiers that can be refined to a refined linear classifier: the generalized binary linear classifiers.

It is worth taking a moment to reflect on what is a refinement of a classifier. In the examples provided thus far, refinements have been used to increase pointwise coverage of a classifier by removing edge cases from the feature space, particularly those along the decision boundary between the labels. For example, while the refined linear classifier has infinite pointwise coverage, the binary linear classifier does not; its edge cases have zero coverage. By moving to the refined model in the case of the linear classifier, we are able to measure and aggregate the explainability of the model without interference from edge cases.

It is not the case, however, that all refinements improve or even preserve pointwise coverage. In fact, for any classifier, there exists a refinement that has zero pointwise coverage. Recall that a classifier has zero pointwise coverage if the supremum of coverage over all points in the feature space is zero. To see this, let $P$ be a classifier on $\RR^n$. Observe that $\QQ^n$ is meagre, Lebesgue null, and dense in $\RR^n$, i.e., $\overline {\QQ^n} = \RR^n$. Let $P'$ be a refinement of $P$ by moving $\QQ^n$ from the feature space of $P$ to the refinement set. Since $\QQ^n$ is dense in $\RR^n$, an open ball containing any particular point $x$ in the feature space will also contain a point in $\QQ^n \subset R$. Hence, there are no anchors for $x$, so $P'$ has zero pointwise coverage. Just as there are unexplainable ordinary classifiers, there are unexplainable unexplainable refined classifiers.

Let us introduce some helpful terminology for refinements. A label is called \emph{negligible} if it is both meagre and Lebesgue null. We say that a refinement $P'$ of classifier $P$ is \emph{eliminative} if $L_{P'}$ is a strict subset of $L_P$. Observe that a classifier $P$ has an eliminative refinement just in case $P$ contains a negligible label.

Let us suppose that $P = \{M, N, R \}$ is a refined linear classifier where $M, N$ are open halfspaces and $R$ is their separating hyperplane and $Q$ is an ordinary classifier with no negligible labels such that $P$ is a refinement of $Q$. Which classifiers satisfy the conditions of $Q$? Since $Q$ has no negligible labels, $P$ is not an eliminative refinement. Then there exists $A \in L_Q$ such that $M \subset A$ and $B \in L_Q$ such that $N \subset B$. Moreover, since $Q$ is ordinary, its refinement set is empty, implying $A \cup B = M \cup N \cup R$.
It may be that $R \subset A$ or $R \subset B$, in which case $Q$ is a binary linear classifier. Additionally, it may be the case that some of $R$ belongs to $A$ and the rest of $R$ belongs to $B$. By removing the constraint that $Q$ has no negligible labels, we extend the collection of ordinary classifiers to include those which partition $R$ into arbitrarily many, even countably many, labels. We refer to this collection as the \emph{generalized binary linear classifiers}.

From the perspective of pointwise coverage, all generalized binary linear classifiers are equivalent: they can be refined to the refined linear classifier. There is a sense in which the binary linear classifier is more natural than the other generalized binary linear classifiers; however, that is outside of the scope of the pointwise coverage framework and is presently left to heuristic. Let us conclude this section by noting that a generalized binary linear classifier is equivalent to a binary linear classifier up to null sets, since the decision boundary is a Lebesgue null set.

\section{Conclusion} \label{s:conclusion}

By introducing a formal framework for classifiers and the topological notion of pointwise coverage, we are able to express what is meant by the most explainable classifier, infinite pointwise coverage, and identify the unique collection of classifiers with this property, the refined linear classifiers. Moreover, up to null sets, only one classifier found ``in the wild'' can be refined to a refined linear classifier: a binary linear classifier. This result accords with our intuitions about the simplicity, utility, and explainability of the binary linear classifier.

%
\bibliographystyle{acm}
\bibliography{imec.bib}

\end{document}